\newcommand*{\affaddr}[1]{#1}
\newcommand*{\affmark}[1][*]{\textsuperscript{#1}}
\DeclareMathOperator*{\argmin}{arg\,min}
\newtheorem{theorem}{Theorem}
\newcolumntype{L}[1]{>{\raggedright\let\newline\\\arraybackslash\hspace{0pt}}m{#1}}
\newcolumntype{C}[1]{>{\centering\let\newline\\\arraybackslash\hspace{0pt}}m{#1}}
\newcolumntype{R}[1]{>{\raggedleft\let\newline\\\arraybackslash\hspace{0pt}}m{#1}}
\newcommand{\smallsim}{\smallsym{\mathrel}{\sim}}
\newcommand{\smallsym}[2]{#1{\mathpalette\make@small@sym{#2}}}
\newcommand{\make@small@sym}[2]{%
  \vcenter{\hbox{$\m@th\downgrade@style#1#2$}}%
}
\newcommand{\downgrade@style}[1]{%
  \ifx#1\displaystyle\scriptstyle\else
    \ifx#1\textstyle\scriptstyle\else
      \scriptscriptstyle
  \fi\fi
}
\ifcvprfinal\pagestyle{empty}\fi
\begin{document}

\title{Network Sketching: Exploiting Binary Structure in Deep CNNs}

\author{Yiwen Guo\affmark[1], Anbang Yao\affmark[1], Hao Zhao\affmark[2,1], Yurong Chen\affmark[1]\\
\affaddr{\affmark[1]Intel Labs China}, \affaddr{\affmark[2]Department of Electronic Engineering, Tsinghua University}\\
\tt\small {\{yiwen.guo,anbang.yao,hao.zhao,yurong.chen\}@intel.com}\\
}



\maketitle
\thispagestyle{empty}

\begin{abstract}
Convolutional neural networks (CNNs) with deep architectures have substantially advanced the state-of-the-art in computer vision tasks.
However, deep networks are typically resource-intensive and thus difficult to be deployed on mobile devices.
Recently, CNNs with binary weights have shown compelling efficiency to the community, whereas the accuracy of such models is usually unsatisfactory in practice.
In this paper, we introduce network sketching as a novel technique of pursuing binary-weight CNNs, targeting at more faithful inference and better trade-off for practical applications.
Our basic idea is to exploit binary structure directly in pre-trained filter banks and produce binary-weight models via tensor expansion.
The whole process can be treated as a coarse-to-fine model approximation, akin to the pencil drawing steps of outlining and shading.
To further speedup the generated models, namely the sketches, we also propose an associative implementation of binary tensor convolutions.
Experimental results demonstrate that a proper sketch of AlexNet (or ResNet) outperforms the existing binary-weight models by large margins on the ImageNet large scale classification task, while the committed memory for network parameters only exceeds a little.
\end{abstract}

\section{Introduction}\label{sec:int}

Over the past decade, convolutional neural networks (CNNs) have been accepted as the core of many computer vision solutions.
Deep models trained on a massive amount of data have delivered impressive accuracy on a variety of tasks, including but not limited to semantic segmentation, face recognition, object detection and recognition.

In spite of the successes, mobile devices cannot take much advantage of these models, mainly due to their inadequacy of computational resources.
As is known to all, camera based games are dazzling to be operated with object recognition and detection techniques, hence it is eagerly anticipated to deploy advanced CNNs (e.g., AlexNet~\cite{Krizhevsky2012}, VGG-Net~\cite{Simonyan2014} and ResNet~\cite{He2016}) on tablets and smartphones.
Nevertheless, as the winner of ILSVRC-2012 competition, AlexNet comes along with nearly 61 million real-valued parameters and 1.5 billion floating-point operations (FLOPs) to classify an image, making it resource-intensive in different aspects.
Running it for real-time applications would require considerable high CPU/GPU workloads and memory footprints, which is prohibitive on typical mobile devices.
The similar situation occurs on the deeper networks like VGG-Net and ResNet.

Recently, CNNs with binary weights are designed to resolve this problem.
By forcing the connection weights to only two possible values (normally $+$1 and $-$1), researchers attempt to eliminate the required floating-point multiplications (FMULs) for network inference, as they are considered to be the most expensive operations.
In addition, since the real-valued weights are converted to be binary, these networks would commit much less space for storing their parameters, which leads to a great saving in the memory footprints and thus energy costs~\cite{Han2015}.
Several methods have been proposed to train such networks~\cite{Courbariaux2015, Courbariaux2016, Rastegari2016}.
However, the reported accuracy of obtained models is unsatisfactory on large dataset (e.g., ImageNet)~\cite{Rastegari2016}.
Even worse, since straightforwardly widen the networks does not guarantee any increase in accuracy~\cite{Juefei-Xu2016}, it is also unclear how we can make a trade-off between the model precision and expected accuracy with these methods.

\begin{figure}[t]
\begin{center}
\includegraphics[width=0.94\linewidth]{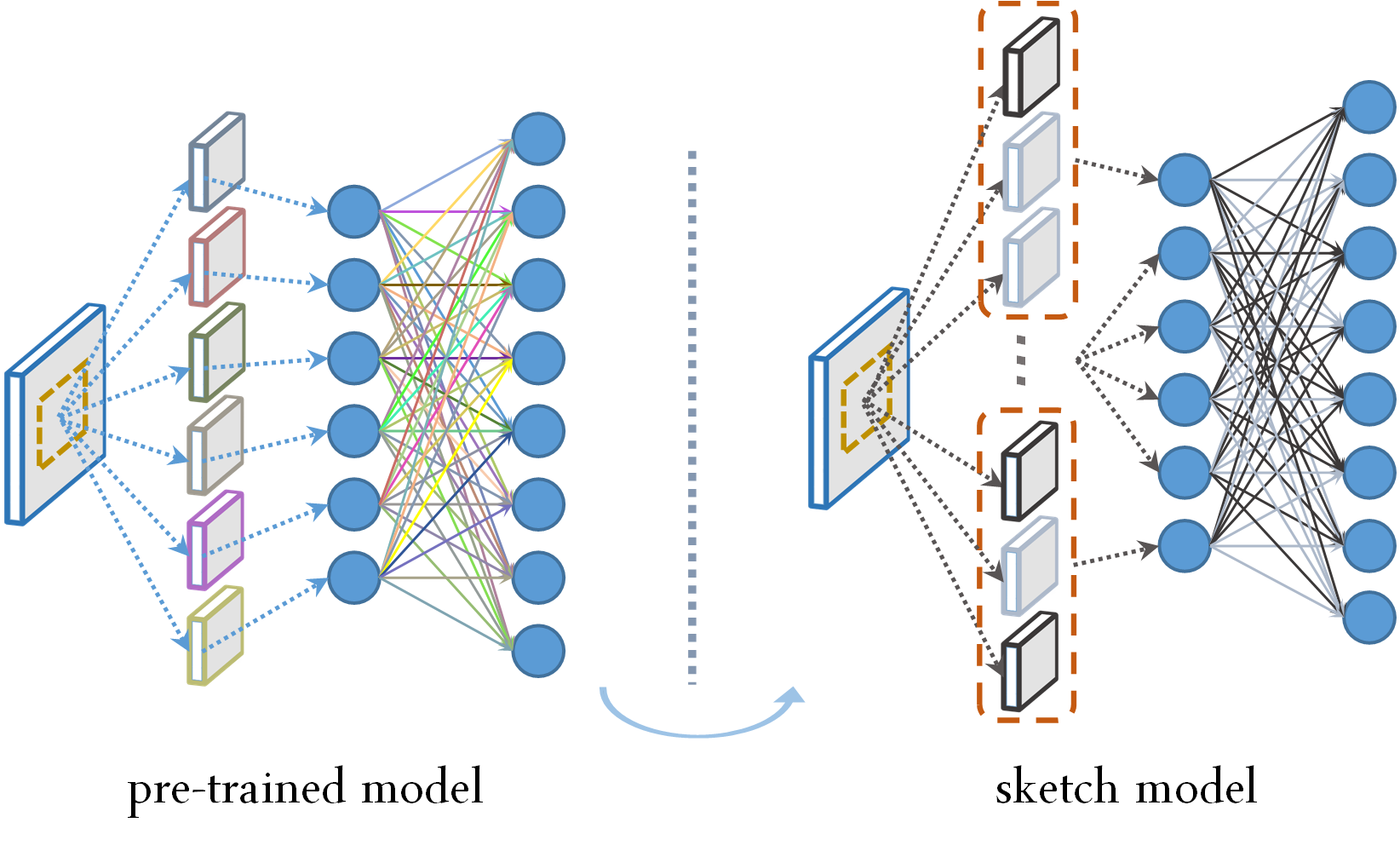}
\end{center}
\vskip -0.1in
\caption{Sketching a network model by exploiting binary structure within pre-trained filter banks, after which the full-precision model can be converted to an efficient one with binary (in black and light grey) connections.}
\label{fig:1}
\vskip -0.1in
\end{figure}

In this paper, we introduce network sketching as a new way of pursuing binary-weight CNNs, where the binary structures are exploited in pre-trained models rather than being trained from scratch.
To seek the possibility of yielding state-of-the-art models, we propose two theoretical grounded algorithms, making it possible to regulate the precision of sketching for more accurate inference.
Moreover, to further improve the efficiency of generated models (a.k.a., sketches), we also propose an algorithm to associatively implement binary tensor convolutions, with which the required number of floating-point additions and subtractions (FADDs)\footnote {Without ambiguity, we collectively abbreviate floating-point additions and floating-point subtractions as FADDs.} is likewise reduced.
Experimental results demonstrate that our method works extraordinarily well on both AlexNet and ResNet.
That is, with a bit more FLOPs required and a little more memory space committed, the generated sketches outperform the existing binary-weight AlexNets and ResNets by large margins, producing near state-of-the-art recognition accuracy on the ImageNet dataset.

The remainder of this paper is structured as follows. In Section~\ref{sec:rel}, we briefly introduce the related works on CNN acceleration and compression.
In Section~\ref{sec:ske}, we highlight the motivation of our method and provide some theoretical analyses for its implementations.
In Section~\ref{sec:spe}, we introduce the associative implementation for binary tensor convolutions.
At last, Section~\ref{sec:exp} experimentally demonstrates the efficacy of our method and Section~\ref{sec:con} draws the conclusions.

\section{Related Works}\label{sec:rel}

The deployment problem of deep CNNs has become a concern for years.
Efficient models can be learnt either from scratch or from pre-trained models.
Generally, training from scratch demands strong integration of network architecture and training policy~\cite{Lin2016}, and here we mainly discuss representative works on the latter strategy.

Early works are usually hardware-specific.
Not restricted to CNNs, Vanhoucke et al.~\cite{Vanhoucke2011} take advantage of programmatic optimizations to produce a $3\times$ speedup on x86 CPUs.
On the other hand, Mathieu et al.~\cite{Mathieu2013} perform fast Fourier transform (FFT) on GPUs and propose to compute convolutions efficiently in the frequency domain.
Additionally, Vasilache et al.~\cite{Vasilache2015} introduce two new FFT-based implementations for more significant speedups.

More recently, low-rank based matrix (or tensor) decomposition has been used as an alternative way to accomplish this task.
Mainly inspired by the seminal works from Denil et al.~\cite{Denil2013} and Rigamonti et al.~\cite{Rigamonti2013}, low-rank based methods attempt to exploit parameter redundancy among different feature channels and filters.
By properly decomposing pre-trained filters, these methods~\cite{Denton2014, Jaderberg2014, Lebedev2014, Zhang2015, Liu2015} can achieve appealing speedups  ($2\times$ to $4\times$) with acceptable accuracy drop ($\leq 1\%$).~\footnote{Some other works concentrate on learning low-rank filters from scratch~\cite{Tai2016, Ioannou2016}, which is out of the scope of our paper.}

Unlike the above mentioned ones, some research works regard memory saving as the top priority.
To tackle the storage issue of deep networks, Gong et al.~\cite{Gong2014}, Wu et al.~\cite{Wu2016} and Lin et al.~\cite{Lin2016} consider applying the quantization techniques to pre-trained CNNs, and trying to make network compressions with minor concessions on the inference accuracy.
Another powerful category of methods in this scope is network pruning.
Starting from the early work of LeCun et al's~\cite{Lecun1989} and Hassibi \& Stork's~\cite{Hassibi1993}, pruning methods have delivered surprisingly good compressions on a range of CNNs, including some advanced ones like AlexNet and VGGnet~\cite{Han2015, Srinivas2015, Guo2016}.
In addition, due to the reduction in model complexity, a fair speedup can be observed as a byproduct.

As a method for generating binary-weight CNNs, our network sketching is orthogonal to most of the existing compression and acceleration methods.
For example, it can be jointly applied with low-rank based methods, by first decomposing the weight tensors into low-rank components and then sketching them.
As for the cooperation with quantization-based methods, sketching first and conducting product quantization thereafter would be a good choice.

\section{Network Sketching}\label{sec:ske}

In general, convolutional layers and fully-connected layers are the most resource-hungry components in deep CNNs. Fortunately, both of them are considered to be over-parameterized~\cite{Denil2013,Veit2016}.
In this section, we highlight the motivation of our method and present its implementation details on the convolutional layers as an example.
Fully-connected layers can be operated in a similar way.

Suppose that the learnable weights of a convolutional layer $\mathcal L$ can be arranged and represented as $\{ \mathbf W^{(i)}:0\leq i < n \}$, in which $n$ indicates the target number of feature maps, and $\mathbf W^{(i)} \in \mathbb R^{c\times w\times h}$ is the weight tensor of its $i$th filter.
Storing all these weights would require $32\times c\times w\times h\times n$ bit memory, and the direct implementation of convolutions would require $s\times c\times w\times h\times n$ FMULs (along with the same number of FADDs), in which $s$ indicates the spatial size of target feature maps.

Since many convolutional models are believed to be informational redundant, it is possible to seek their low-precision and compact counterparts for better efficiency.
With this in mind, we consider exploiting binary structures within $\mathcal L$, by using the divide and conquer strategy.
We shall first approximate the pre-trained filters with a linear span of certain binary basis, and then group the identical basis tensors to pursue the maximal network efficiency.
Details are described in the following two subsections, in which we drop superscript $(i)$ from $\mathbf W$ because the arguments apply to all the $n$ weight tensors.

\subsection{Approximating the Filters}\label{sec:app}

As claimed above, our first goal is to find a binary expansion of $\mathbf W$ that approximates it well (as illustrated in Figure~\ref{fig:2}), which means $\mathbf W \approx \left \langle \mathbf B, \mathbf a \right \rangle = \sum_{j=0}^{m-1} \mathbf a_j \mathbf B_j
$, in which $\mathbf B \in \{+1,-1\}^{c\times w\times h\times m} $ and $\mathbf a \in \mathbb R^m$
are the concatenations of $m$ binary tensors $\{ \mathbf B_0,...,\mathbf B_{m-1} \}$ and the same number of scale factors $\{ \mathbf a_0,...,\mathbf a_{m-1} \}$, respectively.
We herein investigate the appropriate choice of $\mathbf B$ and $\mathbf a$ with a fixed $m$.
Two theoretical grounded algorithms are proposed in Section~\ref{sec:dir} and~\ref{sec:ref}, respectively.

\begin{figure}[ht]
\begin{center}
\includegraphics[width=0.92\linewidth]{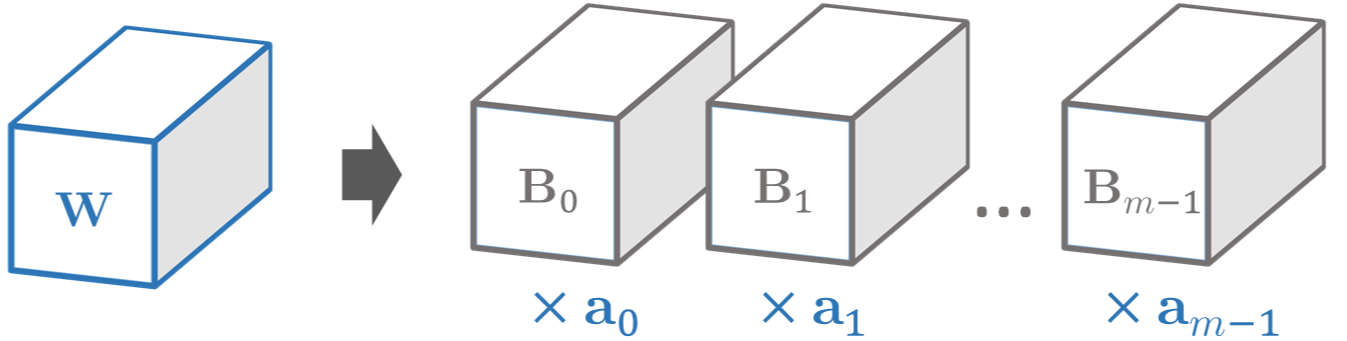}
\end{center}
\caption{Approximate the real-valued weight tensor with a sum of scaled binary tensors.}
\label{fig:2}
\vskip -0.1 in
\end{figure}

\subsubsection{Direct Approximation}\label{sec:dir}

For easy understanding, we shall first introduce the direct approximation algorithm.
Generally, the reconstruction error (or approximation error, round-off error) $e^2:=\| \mathbf W- \langle \mathbf B, \mathbf a \rangle \|^2$ should be minimized to retain the model accuracy after expansion.
However, as a concrete decision problem, directly minimizing $e^2$ seems NP-hard and thus solving it can be very time consuming~\cite{Davis1997}.
In order to finish up in reasonable time, we propose a heuristic algorithm, in which $\mathbf B_j$ and $\mathbf a_j$ are sequentially learnt and each of them is selected to be the current optimum with respect to the $e^2$ minimization criterion.
That is,
\begin{equation}\label{eq:1}
\mathbf B_j, \mathbf a_j = \argmin_{B\in \mathcal B,\ a \in \mathbb R} \ \left \| \hat{\mathbf W}_j - a B \right \|^2,
\end{equation}
in which $\mathcal B := \{+1,-1\}^{c\times w\times h}$, the norm operator $\| \cdot \|$ is defined as $\|\mathbf X\|:=\langle \mathbf X, \mathbf X\rangle^{1/2}$ for any 3-D tensor $\mathbf X$, and $\hat{\mathbf W}_j$ indicates the approximation residue after combining all the previously generated tensor(s).
In particular, $\hat{\mathbf W}_j = \mathbf W$ if $j=0$, and
\begin{equation}\label{eq:2}
\hat{\mathbf W}_j = \mathbf W-\sum_{k=0}^{j-1} \mathbf a_k \mathbf B_k
\end{equation}
if $ j \geq 1$.
It can be easily known that, through derivative calculations, Equation~(\ref{eq:1}) is equivalent with
\begin{equation}\label{eq:3}
\mathbf B_j = \mathrm{sgn}(\hat{\mathbf W}_j)\quad \mathrm{and}\quad  \mathbf a_j=\frac{\langle \mathbf B_j , \hat{\mathbf W}_j \rangle}{t}
\end{equation}
under this circumstance, in which function $\mathrm{sgn}(\cdot)$ calculates the element-wise sign of the input tensor, and $t = c\times w\times h$.

\begin{algorithm}[tbp]
   \caption{The direct approximation algorithm:}
\begin{algorithmic}
   \STATE {\bfseries Input:} $\mathbf {W}$: the pre-trained weight tensor, $m$: the desired cardinality of binary basis. \\
   \STATE{\bfseries Output:} $\{ \mathbf B_j, \mathbf a_j:\ 0\leq j < m \}$: a binary basis and a series of scale factors.\\
   \STATE Initialize $j \leftarrow 0$ and $\hat{\mathbf W}_j\leftarrow \mathbf W$. \\
   \REPEAT
   \STATE Calculate $\mathbf B_j$ and $\mathbf a_j$ by Equation~(\ref{eq:3}).
   \STATE Calculate $\hat{\mathbf W}_{j+1} = \hat{\mathbf W}_j-\mathbf a_j \mathbf B_j$ and update $j \leftarrow j+1$. \\
   \UNTIL{ $j$ reaches its maximal number $m$. }
\end{algorithmic}\label{alg:1}
\end{algorithm}

The above algorithm is summarized in Algorithm~\ref{alg:1}.
It is considered to be heuristic (or greedy) in the sense that each $\mathbf B_j$ is selected to be the current optimum, regardless of whether it will preclude better approximations later on.
Furthermore, some simple deductions give the following theoretical result.

\begin{theorem}\label{theo:1}
For any $m\geq 0$, Algorithm~\ref{alg:1} achieves a reconstruction error $e^2$ satisfying
\begin{equation}\label{eq:4}
e^2 \leq \| \mathbf W \|^2 (1-1/t)^m.
\end{equation}
\end{theorem}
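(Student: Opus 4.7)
The plan is to prove the bound by induction on $m$, showing that each iteration of Algorithm~\ref{alg:1} contracts the squared residue norm by a factor of at most $(1-1/t)$. Since $e^2$ for an $m$-step expansion is exactly $\|\hat{\mathbf W}_m\|^2$ (by Equation~(\ref{eq:2})) and $\hat{\mathbf W}_0 = \mathbf W$, it suffices to establish the one-step inequality $\|\hat{\mathbf W}_{j+1}\|^2 \leq (1-1/t)\,\|\hat{\mathbf W}_j\|^2$ and iterate.

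For the one-step estimate, I would expand
\begin{equation*}
\|\hat{\mathbf W}_{j+1}\|^2 = \|\hat{\mathbf W}_j - \mathbf a_j \mathbf B_j\|^2 = \|\hat{\mathbf W}_j\|^2 - 2\mathbf a_j \langle \mathbf B_j, \hat{\mathbf W}_j\rangle + \mathbf a_j^2\,\|\mathbf B_j\|^2,
\end{equation*}
and then substitute the closed-form choices from Equation~(\ref{eq:3}). Because every entry of $\mathbf B_j$ is $\pm 1$, one has $\|\mathbf B_j\|^2 = t$, and because $\mathbf B_j = \mathrm{sgn}(\hat{\mathbf W}_j)$, the inner product $\langle \mathbf B_j,\hat{\mathbf W}_j\rangle$ equals the sum of absolute values of the entries of $\hat{\mathbf W}_j$, i.e.\ its $\ell_1$ norm (viewed as a flat vector of length $t$). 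After plugging in $\mathbf a_j = \langle \mathbf B_j,\hat{\mathbf W}_j\rangle/t$, the cross term and the quadratic term collapse into $-\langle \mathbf B_j,\hat{\mathbf W}_j\rangle^2/t$, giving the clean identity
\begin{equation*}
\|\hat{\mathbf W}_{j+1}\|^2 = \|\hat{\mathbf W}_j\|^2 - \frac{1}{t}\Bigl(\sum_{k} |(\hat{\mathbf W}_j)_k|\Bigr)^{\!2}.
\end{equation*}

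To finish, I would invoke the elementary inequality $(\sum_k |x_k|)^2 \geq \sum_k x_k^2$ (valid for any real vector, since the cross terms in the squared $\ell_1$ sum are nonnegative), applied to the $t$ entries of $\hat{\mathbf W}_j$. This yields $\|\hat{\mathbf W}_{j+1}\|^2 \leq \|\hat{\mathbf W}_j\|^2 - \|\hat{\mathbf W}_j\|^2/t = (1-1/t)\,\|\hat{\mathbf W}_j\|^2$. A straightforward induction on $j$ then gives $\|\hat{\mathbf W}_m\|^2 \leq \|\mathbf W\|^2 (1-1/t)^m$, which is the claim.

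I do not anticipate any real obstacle: the argument is a textbook greedy-approximation analysis, with the only content being the algebraic simplification using $\|\mathbf B_j\|^2 = t$ and the $\ell_1$-versus-$\ell_2$ comparison. The mildly delicate point worth stating carefully is that the $\ell_1$/$\ell_2$ comparison must be applied to $\hat{\mathbf W}_j$ regarded as a vector of length exactly $t = c\cdot w\cdot h$, which matches the squared norm of the binary tensor $\mathbf B_j$ and is precisely what makes the factor $1/t$ appear in the contraction rate.
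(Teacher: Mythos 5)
Your proposal is correct and follows essentially the same route as the paper's own proof: both expand $\|\hat{\mathbf W}_j - \mathbf a_j\mathbf B_j\|^2$, use $\|\mathbf B_j\|^2 = t$ and the closed form $\mathbf a_j = \langle \mathbf B_j,\hat{\mathbf W}_j\rangle/t$ to collapse the cross and quadratic terms, and then invoke the $\ell_1$-versus-$\ell_2$ comparison (the paper's Equation~(\ref{eq:5}), your $(\sum_k|x_k|)^2 \geq \sum_k x_k^2$) to obtain the per-step contraction factor $(1-1/t)$, iterated over $j=0,\dots,m-1$. Nothing is missing; the only difference is cosmetic, namely that the paper states the $\ell_1/\ell_2$ inequality before squaring while you square it directly.
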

\begin{proof}
Since $\mathbf B_j = \mathrm{sgn}(\hat{\mathbf W}_j)$, we can obtain that,
\begin{equation}\label{eq:5}
\langle \mathbf B_j, \hat{\mathbf W}_j \rangle = \sum \nolimits_l | \hat{w}^{(l)}_j | \geq \| \hat{\mathbf W}_j \|,
\end{equation}
in which $\hat{w}_j$ is an entry of $\hat{\mathbf W}_j$, with superscript $(l)$ indicates its index. From Equation~(\ref{eq:2}) and~(\ref{eq:5}), we have
\begin{equation}\label{eq:6}
\begin{split}
\| \hat{\mathbf W}_{j+1} \|^2 & = \|\hat{\mathbf W}_j \|^2 - \mathbf a_j \langle \mathbf B_j, \hat{\mathbf W}_j \rangle \\
& = \|\hat{\mathbf W}_j \|^2 \left (1 - \frac{\langle \mathbf B_j, \hat{\mathbf W}_j \rangle^2}{ t\|\hat{\mathbf W}_j \|^2} \right ) \\
& \leq \|\hat{\mathbf W}_j \|^2 \left (1 - 1/t \right ).
\end{split}
\end{equation}
The result follows by applying Formula~(\ref{eq:6}) for $j$ varying from $0$ to $m-1$.
\end{proof}

\subsubsection{Approximation with Refinement}\label{sec:ref}

We can see from Theorem~\ref{theo:1} that, by utilizing the direct approximation algorithm, the reconstruction error $e^2$ decays exponentially with a rate proportional to $1/t$.
That is, given a $\mathbf W$ with small size (i.e., when $t$ is small), the approximation in Algorithm~\ref{alg:1} can be pretty good with only a handful of binary tensors.
Nevertheless, when $t$ is relatively large, the reconstruction error will decrease slowly and the approximation can be unsatisfactory even with a large number of binary tensors.
In this section, we propose to refine the direct approximation algorithm for better reconstruction property.

Considering that, in Algorithm~\ref{alg:1}, both $\mathbf B_j$ and $\mathbf a_j$ are chosen to minimize $e^2$ with fixed counterparts.
However, in most cases, it is doubtful whether $\mathbf B$ and $\mathbf a$ are optimal overall.
If not, we may need to refine at least one of them for the sake of better approximation.
On account of the computational simplicity, we turn to a specific case when $\mathbf B$ is fixed.
That is, suppose the direct approximation has already produced $\hat{\mathbf B}$ and $\hat{\mathbf a}$, we hereby seek another scale vector $\mathbf a$ satisfying $\|\mathbf W-\langle \hat{\mathbf B},\mathbf a \rangle \|^2 \leq \|\mathbf W-\langle \hat{\mathbf B},\hat{\mathbf a} \rangle \|^2$.
To achieve this, we follow the least square regression method and get
\begin{equation}\label{eq:7}
\mathbf a_j = \left ( B_j^T B_j \right)^{-1} B_j^T \cdot \mathrm{vec}(\mathbf W),
\end{equation}
in which the operator $\mathrm{vec}(\cdot)$ gets a column vector whose elements are taken from the input tensor, and $B_j$ gets $[ \mathrm{vec}(\mathbf B_{0}),...,\mathrm{vec}(\mathbf B_{j})]$.

\begin{algorithm}[tbp]
   \caption{Approximation with refinement:}
\begin{algorithmic}
   \STATE {\bfseries Input:} $\mathbf {W}$: the pre-trained weight tensor, $m$: the desired cardinality of binary basis. \\
   \STATE{\bfseries Output:} $\{ \mathbf B_j, \mathbf a_j: 0\leq j < m \}$: a binary basis and a series of scale factors.\\
   \STATE Initialize $j \leftarrow 0$ and $\hat{\mathbf W}_j\leftarrow \mathbf W$. \\
   \REPEAT
   \STATE Calculate $\mathbf B_j$ and $\mathbf a_j$ by Equation~(\ref{eq:3}) and~(\ref{eq:7}). \\
   \STATE Update $j \leftarrow j+1$ and calculate $\hat{\mathbf W}_j$ by Equation~(\ref{eq:2}). \\
   \UNTIL{ $j$ reaches its maximal number $m$. }
\end{algorithmic}\label{alg:2}
\end{algorithm}

The above algorithm with scale factor refinement is summarized in Algorithm~\ref{alg:2}.
Intuitively, the refinement operation attributes a memory-like feature to our method, and the following theorem ensures it can converge faster in comparison with Algorithm~\ref{alg:1}.
\begin{theorem}\label{theo:2}
For any $m\geq 0$, Algorithm~\ref{alg:2} achieves a reconstruction error $e^2$ satisfying
\begin{equation}\label{eq:8}
e^2 \leq \left \| \mathbf W \right \|^2 \prod_{j=0}^{m-1} \left (1-\frac{1}{t-\lambda(j,t)} \right ),
\end{equation}
in which $\lambda(j,t)\geq0$, for $0 \leq j \leq m-1$.
\end{theorem}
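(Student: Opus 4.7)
The plan is to follow the template of the proof of Theorem~\ref{theo:1}, but to sharpen the one-step contraction by exploiting the orthogonality that the least-squares refinement in equation~(\ref{eq:7}) enforces. The key observation is that (\ref{eq:7}) returns the normal-equation solution: after the refinement at the end of iteration $j-1$, the running residue $\hat{\mathbf W}_j$ is the orthogonal projection residue of $\mathrm{vec}(\mathbf W)$ onto $\mathrm{span}\{\mathrm{vec}(\mathbf B_0),\ldots,\mathrm{vec}(\mathbf B_{j-1})\}$, and hence $\langle \hat{\mathbf W}_j,\mathbf B_k\rangle=0$ for every $k<j$. This stronger orthogonality was unavailable in Algorithm~\ref{alg:1}, where the residue was only forced to be orthogonal to the most recent basis tensor, and it is precisely what lets us beat the rate of Theorem~\ref{theo:1}.

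At iteration $j$ one selects $\mathbf B_j=\mathrm{sgn}(\hat{\mathbf W}_j)$ and then refits $\mathbf W$ over the enlarged span. I would decompose $\mathbf B_j=\mathbf B_j^{\parallel}+\mathbf B_j^{\perp}$ with $\mathbf B_j^{\parallel}$ the projection of $\mathbf B_j$ onto $\mathrm{span}\{\mathbf B_0,\ldots,\mathbf B_{j-1}\}$. Because $\hat{\mathbf W}_j$ is already orthogonal to that span, $\langle \hat{\mathbf W}_j,\mathbf B_j\rangle=\langle \hat{\mathbf W}_j,\mathbf B_j^{\perp}\rangle$, and a Pythagorean calculation with the refined projection gives
\[
\|\hat{\mathbf W}_{j+1}\|^2 \;=\; \|\hat{\mathbf W}_j\|^2 \;-\; \frac{\langle \hat{\mathbf W}_j,\mathbf B_j\rangle^{2}}{\|\mathbf B_j^{\perp}\|^{2}}.
\]
Since each entry of $\mathbf B_j$ is $\pm 1$, $\|\mathbf B_j\|^{2}=t$, so $\|\mathbf B_j^{\perp}\|^{2}=t-\|\mathbf B_j^{\parallel}\|^{2}$. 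Setting $\lambda(j,t):=\|\mathbf B_j^{\parallel}\|^{2}\geq 0$, with $\lambda(0,t)=0$ by convention (the initial span is trivial), and reusing the inequality in~(\ref{eq:5}) to get $\langle \hat{\mathbf W}_j,\mathbf B_j\rangle\geq\|\hat{\mathbf W}_j\|$, substitution yields the one-step contraction $\|\hat{\mathbf W}_{j+1}\|^{2}\leq \|\hat{\mathbf W}_j\|^{2}\bigl(1-1/(t-\lambda(j,t))\bigr)$. Applying this for $j=0,\ldots,m-1$ gives~(\ref{eq:8}).

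The main obstacle is bookkeeping rather than any deep mathematical difficulty: one has to read~(\ref{eq:7}) as returning the full length-$(j{+}1)$ vector of jointly refit scales rather than only the newest scalar, for only then is $\hat{\mathbf W}_{j+1}$ the global least-squares residue whose orthogonality to all previous basis tensors drives the Pythagorean step. Once that reading is fixed, the improvement over Theorem~\ref{theo:1} comes essentially for free by replacing the crude denominator $t=\|\mathbf B_j\|^{2}$ with the projection-reduced quantity $t-\lambda(j,t)$, which is strictly smaller whenever the freshly chosen $\mathbf B_j$ has a nonzero component inside the previously built span.
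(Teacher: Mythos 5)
Your proposal is correct, and its skeleton is the same as the paper's: both reduce Theorem~\ref{theo:2} to the one-step contraction $\| \hat{\mathbf W}_{j+1} \|^2 \leq \|\hat{\mathbf W}_j \|^2 \left (1-1/(t-\lambda(j,t)) \right )$, with $\lambda(j,t)$ equal to the squared norm of the projection of the newly chosen sign tensor onto the span of the previous ones, and then chain over $j$ using inequality~(\ref{eq:5}). Where you differ is in how that contraction is derived. The paper proceeds algebraically: it inverts the enlarged Gram matrix $(B_{j+1}^T B_{j+1})^{-1}$ via block-matrix formulas, extracts the residue recursion $w_{j+1}=\left ( I- \Lambda (b_{j+1} b_{j+1}^T)/(b_{j+1}^T \Lambda b_{j+1}) \right ) w_j$ with $\Lambda = I - B_j \Phi B_j^T$, and then invokes a trace inequality for positive semi-definite matrices. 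You instead exploit the fact that the jointly-refit residue produced by Equation~(\ref{eq:7}) is exactly the orthogonal-projection residue of $\mathrm{vec}(\mathbf W)$ onto the current span, so the rank-one (Gram--Schmidt) update of the projector plus Pythagoras yields the decrement as an exact identity, $\|\hat{\mathbf W}_{j+1}\|^2 = \|\hat{\mathbf W}_j\|^2 - \langle \hat{\mathbf W}_j,\mathbf B_j\rangle^{2}/\|\mathbf B_j^{\perp}\|^{2}$. This buys two things: it avoids the block inversion and the trace inequality entirely, and it shows the only slack in the final bound comes from~(\ref{eq:5}), since your intermediate step is an equality where the paper writes an inequality (indeed, the paper's own Geometric Interpretation section retroactively explains its $\lambda(j,t)$ in precisely your projection language). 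One shared blemish, not a gap relative to the paper: both derivations divide by $\|\mathbf B_j^{\perp}\|^{2} = t-\lambda(j,t)$ and so implicitly assume the new sign tensor is not already in the previous span; in that degenerate case $\langle \hat{\mathbf W}_j,\mathbf B_j\rangle = 0$, which together with~(\ref{eq:5}) forces $\hat{\mathbf W}_j = 0$, so reconstruction is already exact and the claim is vacuous --- a sentence noting this would make your argument airtight.
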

\begin{proof}
To simplify the notations, let us further denote $w_j := \mathrm{vec}(\mathbf W_j)$ and $b_{j+1} := \mathrm{vec}(\mathbf B_{j+1})$, then we can obtain by block matrix multiplication and inverse that,
\begin{equation}
(B_{j+1}^T B_{j+1})^{-1} =\begin{bmatrix}
\Phi+\Phi \psi \psi^T\Phi/r & -\Phi \psi/r \\
-\psi^T \Phi/r & 1/r
\end{bmatrix},
\end{equation}
in which $\Phi = (B_j^T B_j)^{-1}$, $\psi = B_{j}^T b_{j+1}$, and $r = b_{j+1}^T b_{j+1}-\psi^T \Phi \psi$.
Consequently, we have the following equation for $j=0,...,m-1$,
\begin{equation}\label{eq:ske11}
w_{j+1}=\left ( I- \frac{\Lambda (b_{j+1} b_{j+1}^T)}{b_{j+1}^T \Lambda b_{j+1}} \right ) w_j,
\end{equation}
by defining $\Lambda := I-B_j\Phi B_j^T$. As we know, given positive semi-definite matrices $X$ and $Y$, $\mathrm{tr}(XY)\leq \mathrm{tr}(X)\mathrm{tr}(Y)$. Then, Equation~(\ref{eq:ske11}) gives,
\begin{equation}\nonumber
\begin{split}
\| \hat{\mathbf W}_{j+1} \|^2 & \leq \| \hat{\mathbf W}_j \|^2 - \frac{w_j^T (b_{j+1} b_{j+1}^T) \Lambda (b_{j+1} b_{j+1}^T) w_j}{ (b_{j+1}^T \Lambda b_{j+1})^2} \\
& = \| \hat{\mathbf W}_j \|^2 - \frac{w_j^T (b_{j+1} b_{j+1}^T) w_j}{b_{j+1}^T \Lambda b_{j+1}} \\
& = \|\hat{\mathbf W}_j \|^2 \left (1 - \frac{\langle \mathbf B_j, \hat{\mathbf W}_j \rangle^2}{b_{j+1}^T \Lambda b_{j+1} \|\hat{\mathbf W}_j \|^2 } \right ).
\end{split}
\end{equation}
Obviously, it follows that,
\begin{equation}
\| \hat{\mathbf W}_{j+1} \|^2 \leq \|\hat{\mathbf W}_j \|^2 (1-1/(t-\lambda(j,t))),
\end{equation}
in which $\lambda (j,t) := b_{j+1}^T (I-\Lambda) b_{j+1}$.
Since $\lambda (j,t)$ represents the squared Euclidean norm of an orthogonal projection of $b_{j+1}$, it is not difficult to prove $\lambda(j,t)\geq0$, and then the result follows.
\end{proof}

\subsection{Geometric Interpretation}\label{sec:exp}

After expanding the pre-trained filters, we can group the identical binary tensors to save some more memory footprints.
In this paper, the whole technique is named as network sketching, and the generated binary-weight model is straightforwardly called a sketch.
Next we shall interpret the sketching process from a geometric point of view.

For a start, we should be aware that, Equation~(\ref{eq:1}) is essentially seeking a linear subspace spanned by a set of $t$-dimensional binary vectors to minimize its Euclidean distance to $\mathrm{vec}(\mathbf W)$.
In concept, there are two variables to be determined in this problem.
Both Algorithm~\ref{alg:1} and~\ref{alg:2} solve it in a heuristic way, and the $j$th binary vector is always estimated by minimizing the distance between itself and the current approximation residue.
What make them different is that, Algorithm~\ref{alg:2} takes advantage of the linear span of its previous $j-1$ estimations for better approximation, whereas Algorithm~\ref{alg:1} does not.

Let us now take a closer look at Theorem~\ref{theo:2}.
Compared with Equation~(\ref{eq:4}) in Theorem~\ref{theo:1}, the distinction of Equation~(\ref{eq:8}) mainly lies in the existence of $\lambda(j,t)$.
Clearly, Algorithm~\ref{alg:2} will converge faster than Algorithm~\ref{alg:1} as long as $\lambda(j,t)>0$ holds for any $j \in [0,m-1]$.
Geometrically speaking, if we consider $B_j(B_j^T B_j)^{-1} B_j^T$ as the matrix of an orthogonal projection onto $\mathcal S_j:=\mathrm{span}(b_0,...,b_j)$, then $\lambda(j,t)$ is equal to the squared Euclidean norm of a vector projection.
Therefore, $\lambda(j,t)=0$ holds if and only if vector $b_{j+1}$ is orthogonal to $\mathcal S_j$, or in other words, orthogonal to each element of $\{b_0,...,b_j\}$ which occurs with extremely low probability and only on the condition of $t\in\{2k: k\in \mathbb N\}$.
That is, Algorithm~\ref{alg:2} will probably prevail over Algorithm~\ref{alg:1} in practice.

\section{Speeding-up the Sketch Model}\label{sec:spe}

Using Algorithm~\ref{alg:1} or~\ref{alg:2}, one can easily get a set of $mn$ binary tensors on $\mathcal L$, which means the storage requirement of learnable weights is reduced by a factor of $32t/(32m+tm)\times$.
When applying the model, the required number of FMULs is also significantly reduced, by a factor of $(t/m)\times$.
Probably, the only side effect of sketching is some increases in the number of FADDs, as it poses an extra burden on the computing units.

In this section, we try to ameliorate this defect and introduce an algorithm to further speedup the binary-weight networks.
We start from an observation that, yet the required number of FADDs grows monotonously with $m$, the inherent number of addends and augends is always fixed with a given input of $\mathcal L$.
That is, some repetitive FADDs exist in the direct implementation of binary tensor convolutions.
Let us denote $\mathbf X\in \mathbb R^{c\times w\times h}$ as an input sub-feature map and see Figure~\ref{fig:3} for a schematic illustration.

\begin{figure}[ht]
\begin{center}
\includegraphics[width=0.92\linewidth]{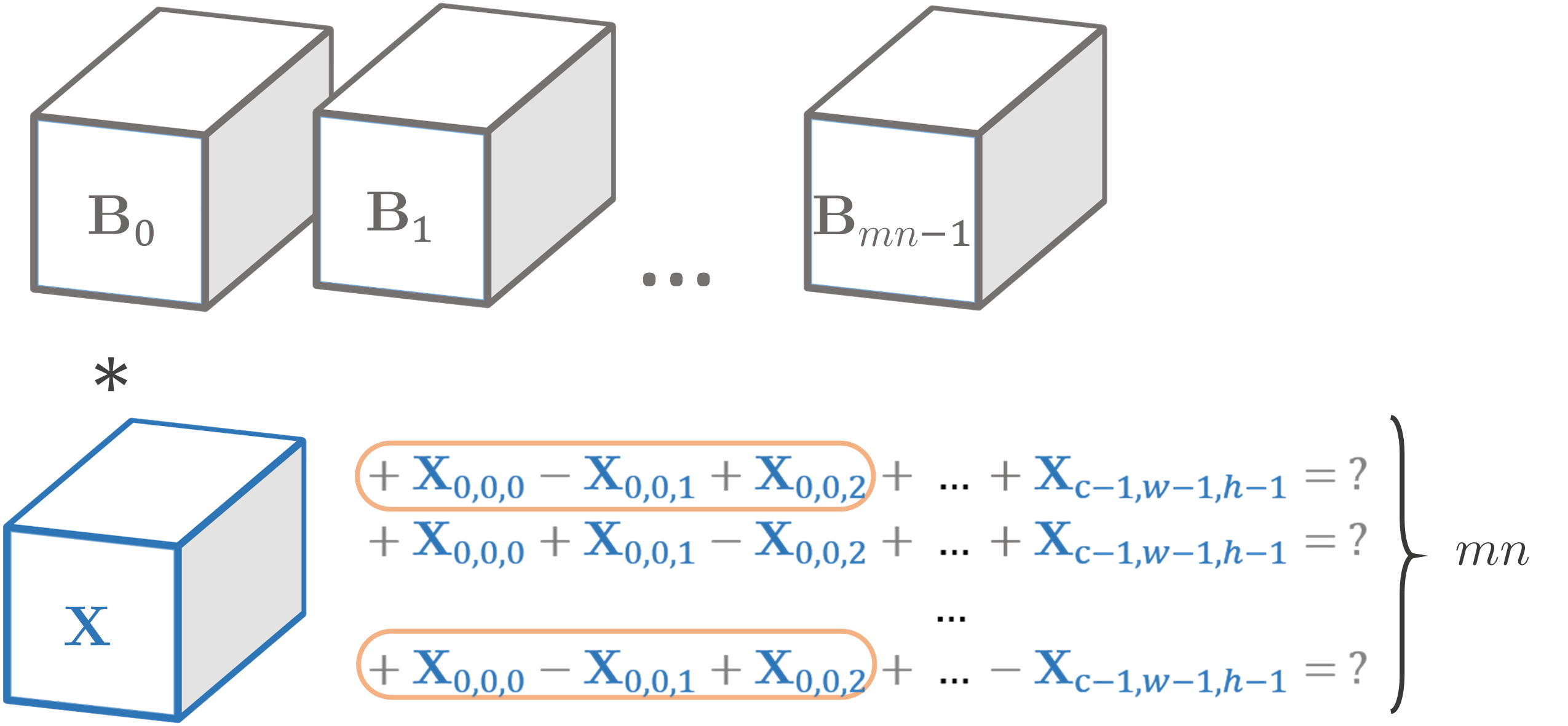}
\end{center}
\caption{As highlighted in the rounded rectangles, with high probability, repetitive FADD exists in the direct implementation of binary tensor convolutions.}
\label{fig:3}
\end{figure}

\subsection{Associative Implementation}\label{sec:ass}

To properly avoid redundant operations, we first present an associative implementation of the multiple convolutions $\mathbf X\ast \mathbf B_0,...,\mathbf X\ast \mathbf B_{mn-1}$ on $\mathcal L$, in which the connection among different convolutions is fully exploited.
To be more specific, our strategy is to perform convolutions in a hierarchical and progressive way.
That is, each of the convolution results is used as a baseline of the following calculations.
Suppose the $j_0$-th convolution is calculated in advance and it produces $\mathbf X \ast \mathbf B_{j_0}=s$, then the convolution of $\mathbf X$ and $\mathbf B_{j_1}$ can be derived by using
\begin{equation}\label{eq:12}
\mathbf X \ast \mathbf B_{j_1} = s+(\mathbf X \ast (\mathbf B_{j_0} \veebar \mathbf B_{j_1}) )\times2,
\end{equation}
or alternatively,
\begin{equation}\label{eq:13}
\mathbf X \ast \mathbf B_{j_1} =s-(\mathbf X \ast (\neg \mathbf B_{j_0} \veebar \mathbf B_{j_1}) )\times 2,
\end{equation}
in which $\neg$ denotes the element-wise not operator, and $\veebar$ denotes an element-wise operator whose behavior is in accordance with Table~\ref{tab:1}.

\begin{table}
\begin{center}
\begin{tabular}{|c|c|c|}
\hline
$\mathbf B_{j_1}$ & $\mathbf B_{j_2}$ & $\mathbf B_{j_1} \vee \mathbf B_{j_2}$  \\
\hline \hline
$+1$ & $-1$  & $-1$\\
$+1$ & $+1$ & $0$ \\
$-1$  & $-1$  & $0$\\
$-1$  & $+1$  & $+1$\\
\hline
\end{tabular}
\end{center}
\caption{Truth table of the element-wise operator $\vee$.}\label{tab:1}
\end{table}

Since $\mathbf B_{j_0} \veebar \mathbf B_{j_1}$ produces ternary outputs on each index position, we can naturally regard $\mathbf X \ast (\mathbf B_{j_0} \veebar \mathbf B_{j_1})$ as an iteration of \texttt{switch} ... \texttt{case} ...  statements.
In this manner, only the entries corresponding to $\pm1$ in $\mathbf B_{j_0} \veebar \mathbf B_{j_1}$ need to be operated in $\mathbf X$, and thus acceleration is gained.
Assuming that the inner-product of $\mathbf B_{j_0}$ and $\mathbf B_{j_1}$ equals to $r$, then $(t-r)/2+1$ and $(t+r)/2+1$ FADDs are still required for calculating Equation~(\ref{eq:12}) and~(\ref{eq:13}), respectively.
Obviously, we expect the real number $r\in [-t,+t]$ to be close to either $t$ or $-t$ for the possibility of fewer FADDs, and thus faster convolutions in our implementation.
In particular, if $r\geq0$, Equation~(\ref{eq:12}) is chosen for better efficiency; otherwise, Equation~(\ref{eq:13}) should be chosen.

\subsection{Constructing a Dependency Tree}\label{sec:dep}

Our implementation works by properly rearranging the binary tensors and implementing binary tensor convolutions in an indirect way.
For this reason, along with Equations~(\ref{eq:12}) and~(\ref{eq:13}), a dependency tree is also required to drive it.
In particular, dependency is the notion that certain binary tensors are linked to specify which convolution to perform first and which follows up.
For instance, with the depth-first-search strategy, $\mathcal T$ in Figure~\ref{fig:4} shows a dependency tree indicating first to calculate $\mathbf X\ast \mathbf B_1$, then to derive $\mathbf X\ast \mathbf B_0$ from the previous result, then to calculate $\mathbf X\ast \mathbf B_3$ on the base of $\mathbf X\ast \mathbf B_0$, and so forth.
By traversing the whole tree, all $mn$ convolutions will be progressively and efficiently calculated.

\begin{figure}[ht]
\begin{center}
\includegraphics[width=0.72\linewidth]{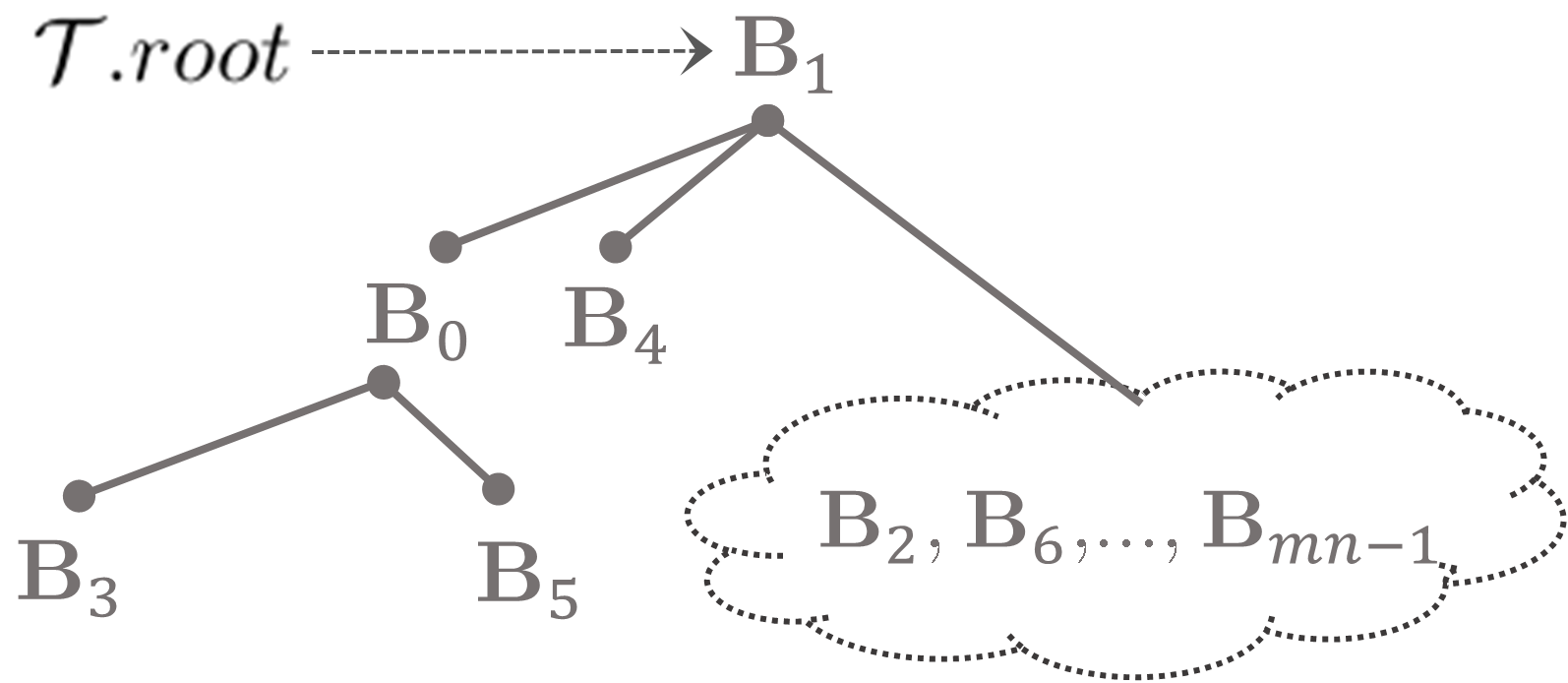}
\end{center}
\caption{A dependency tree for our algorithm. It suggests an order under which the associative convolutions are to be performed.}
\label{fig:4}
\end{figure}

In fact, our algorithm works with a stochastically given tree, but a dedicated $\mathcal T$ is still in demand for its optimum performance.
Let $G=\{V,E\}$ be an undirected weighted graph with the vertex set $V$ and weight matrix $E \in \mathbb R^{mn\times mn}$.
Each element of $V$ represents a single binary tensor, and each element of $E$ measures the distance between two chosen tensors.
To keep in line with the previous subsection, we here define the distance function of the following form
\begin{equation}
d(\mathbf B_{j_0},\mathbf B_{j_1}):=\min \left(\frac{t+r}{2}, \frac{t-r}{2} \right),
\end{equation}
in which $r=\langle \mathbf B_{j_0}, \mathbf B_{j_1}\rangle$ indicates the inner-product of $\mathbf B_{j_0}$ and $\mathbf B_{j_1}$.
Clearly, the defined function is a metric on $\{-1,+1\}^{c\times w\times h}$ and its range is restricted in $[0,t]$.
Recall that, we expect $r$ to be close to $\pm t$ in the previous subsection.
In consequence, the optimal dependency tree should have the shortest distance from root to each of its vertices, and thus the minimum spanning tree (MST) of $G$ is what we want.

From this perspective, we can use some off-the-shell algorithms to construct such a tree.
Prim's algorithm~\cite{Prim1957} is chosen in this paper on account of its linear time complexity with respect to $|E|$, i.e., $O(m^2n^2)$ on $\mathcal L$.
With the obtained $\mathcal T$, one can implement our algorithm easily and the whole process is summarized in Algorithm~\ref{alg:3}.
Note that, although the fully-connected layers calculate vector-matrix multiplications, they can be considered as a bunch of tensor convolutions.
Therefore, in the binary case, we can also make accelerations in the fully-connected layers by using Algorithm~\ref{alg:3}.

\begin{algorithm}[tbp]
   \caption{The associative implementation:}
\begin{algorithmic}
   \STATE {\bfseries Input:} $\{\mathbf {B}_j: 0\leq j<mn\}$: the set of binary tensors, $\mathbf X$: the input sub-feature map, $\mathcal T$: the dependency tree. \\
   \STATE{\bfseries Output:} $\{y_j:0\leq j<mn\}$: the results of convolution.\\
   \STATE Get $z =\mathcal T.root$ and calculate $y_{z.key} = \mathbf X \ast \mathbf B_{z.key}$.\\
   \STATE Initialize the baseline value by $s\leftarrow y_{z.key}$.
   \REPEAT
   \STATE Search the next node of $\mathcal T$ and update $z$, $s$.
   \STATE Calculate $y_{z.key}$ by using Equation~(\ref{eq:12}) or~(\ref{eq:13}). \\
   \UNTIL{ search ends. }
\end{algorithmic}\label{alg:3}
\end{algorithm}

\section{Experimental Results}\label{sec:exp}

In this section, we try to empirically analyze the proposed algorithms.
For pragmatic reasons, all experiments are conducted on the famous ImageNet ILSVRC-2012 database~\cite{ILSVRC15} with advanced CNNs and the open-source Caffe framework~\cite{Jia2014}.
The training set  is comprised of 1.2 million labeled images and the test set is comprised of 50,000 validation images.

In Section~\ref{sec:exp1} and~\ref{sec:exp2}, we will test the performance of the sketching algorithms (i.e., Algorithm~\ref{alg:1} and~\ref{alg:2}) and the associative implementation of convolutions (i.e., Algorithm~\ref{alg:3}) in the sense of filter approximation and computational efficiency, respectively.
Then, in Section~\ref{sec:exp3}, we evaluate the whole-net performance of our sketches and compare them with other binary-weight models.

\subsection{Efficacy of Sketching Algorithms}\label{sec:exp1}

As a starting experiment, we consider sketching the famous AlexNet model~\cite{Krizhevsky2012}.
Although it is the champion solution of ILSVRC-2012, AlexNet seems to be very resource-intensive.
Therefore, it is indeed appealing to seek its low-precision and efficient counterparts.
As claimed in Section~\ref{sec:int}, AlexNet is an 8-layer model with 61M learnable parameters.
Layer-wise details are shown in Table~\ref{tab:2}, and the pre-trained reference model is available online~\footnote{\url{https://github.com/BVLC/caffe/tree/master/models/bvlc_alexnet}.}.

\begin{table}
\begin{center}
\begin{tabular}{|c|c|c|C{0.55in}|}
\hline
Layer Name & Filters & Params (b) & FLOPs   \\
\hline \hline
Conv1 & 96   & $\smallsim$1M   & $\smallsim$211M\\
Conv2 & 256 & $\smallsim$10M & $\smallsim$448M\\
Conv3 & 384 & $\smallsim$28M & $\smallsim$299M\\
Conv4 & 384 & $\smallsim$21M & $\smallsim$224M\\
Conv5 & 256 & $\smallsim$14M & $\smallsim$150M\\
Fc6 & 1 & $\smallsim$1208M & $\smallsim$75M\\
Fc7 & 1 & $\smallsim$537M & $\smallsim$34M\\
Fc8 & 1 & $\smallsim$131M & $\smallsim$8M\\
\hline
\end{tabular}
\end{center}
\caption{Details of the learnable layers in AlexNet~\cite{Krizhevsky2012}, in which "Conv2" is the most computationally expensive one and "Fc6" commits the most memory (in bits). In all these layers, FLOPs consist of the same number of FADDs and FMULs.}\label{tab:2}
\end{table}

Using Algorithm~\ref{alg:1} and~\ref{alg:2}, we are able to generate binary-weight AlexNets with different precisions.
Theoretical analyses have been given in Section~\ref{sec:ske}, so in this subsection, we shall empirically analyze the proposed algorithms.
In particular, we demonstrate in Figure~\ref{fig:5} how "energy" accumulates with a varying size of memory commitment for different approximators.
Defined as $1-\sum e^2/\sum\|W\|^2$, the accumulative energy is negatively correlated with reconstruction error~\cite{Zhang2015}, so the faster it increases, the better.
In the figure, our two algorithms are abbreviately named as "Sketching (direct)" and "Sketching (refined)".
To compare with other strategies, we also test the stochastically generated binary basis (named "Sketching\_random") as used in~\cite{Juefei-Xu2016}, and the network pruning technique~\cite{Han2015} which is not naturally orthogonal with our sketching method.
The scalar factors for "Sketching (random)" is calculated by Equation~(\ref{eq:7}) to ensure its optimal performance.

We can see that, it is consistent with the theoretical result that Algorithm~\ref{alg:1} converges much slower than Algorithm~\ref{alg:2} on all learnable layers, making it less effective on the filter approximation task.
However, on the other hand, Algorithm~\ref{alg:1} can be better when compared with "Sketching (random)" and the pruning technique.
With small working memory, our direct approximation algorithm approximates better.
However, if the memory size increases, pruning technique may converge faster to its optimum.

As discussed in Section~\ref{sec:spe}, parameter $m$ balances the model accuracy and efficiency in our algorithms.
Figure~\ref{fig:5} shows that, a small $m$ (for example 3) should be adequate for AlexNet to attain over 80\% of the accumulative energy in its refined sketch.
Let us take layer "Conv5" and "Fc6" as examples and see Table~\ref{tab:3} for more details.

\begin{table}[htb]
\begin{center}
\begin{tabular}{|C{0.8in}|C{0.65in}|C{0.6in}|C{0.5in}|}
\hline
Layer Name & Energy (\%) & Params (b) & FMULs   \\
\hline \hline
Conv2\_sketch & 82.9 & $\smallsim$0.9M & $\smallsim$560K  \\
Fc6\_sketch & 94.0 & $\smallsim$114M & $\smallsim$12K  \\
\hline
\end{tabular}
\end{center}
\caption{With only 3 bits allocated, the refined sketch of AlexNet attains over 80\% of  the energy on "Conv2" and "Fc6", and more than \textbf{10}$\times$ reduction in the committed memory for network parameters. Meanwhile, the required number of FMULs is also extremely reduced (by \textbf{400}$\times$ and $\sim$\textbf{3000}$\times$) on the two layers.}\label{tab:3}
\end{table}

\subsection{Efficiency of Associative Manipulations}\label{sec:exp2}

The associative implementation of binary tensor manipulations (i.e., Algorithm~\ref{alg:3}) is directly tested on the 3-bit refined sketch of AlexNet.
To begin with, we still focus on "Conv2\_sketch" and "Fc6\_sketch".
Just to be clear, we produce the result of Algorithm~\ref{alg:3} with both a stochastically generated dependency tree and a delicately calculated MST, while the direct implementation results are compared as a benchmark.
All the implementations require the same number of FMULs, as demonstrated in Table~\ref{tab:3}, and significantly different number of FADDs, as compared in Table~\ref{tab:4}.
Note that, in the associative implementations, some logical evaluations and $\times 2$ operations are specially involved.
Nevertheless, they are much less expensive than the FADDs and FMULs~\cite{Rastegari2016}, by at least an order of magnitude.
Hence, their cost are not deeply analyzed in this subsection~\footnote{Since the actual speedups varies dramatically with the architecture of processing units, so we will not measure it in the paper.}.

\begin{table}[ht]
\begin{center}
\begin{tabular}{|C{1.2in}|C{0.8in}|C{0.68in}|}
\hline
Implementation & Conv2\_sketch & Fc6\_sketch   \\
\hline \hline
Direct & $\smallsim$672M & $\smallsim$113M  \\
Associative (random) & $\smallsim$328M & $\smallsim$56M  \\
Associative (MST) & $\smallsim$265M & $\smallsim$49M  \\
\hline
\end{tabular}
\end{center}
\caption{The associative implementation remarkably reduces the required number of FADDs on "Conv2\_sketch" and "Fc6\_sketch".}\label{tab:4}
\end{table}

From the above table, we know that our associative implementation largely reduces the required number of FADDs on "Conv2\_sketch" and "Fc6\_sketch".
That is, it properly ameliorates the adverse effect of network sketching and enables us to evaluate the 3-bit sketch of AlexNet without any unbearably increase in the required amount of computation.
In addition, the MST helps to further improve its performance and finally get $\sim$\textbf{2.5}$\times$ and $\sim$ \textbf{2.3}$\times$ reductions on the two layers respectively.
Results on all learnable layers are summarized in Figure~\ref{fig:6}.

\begin{figure}[t]
\includegraphics[width=0.87\linewidth]{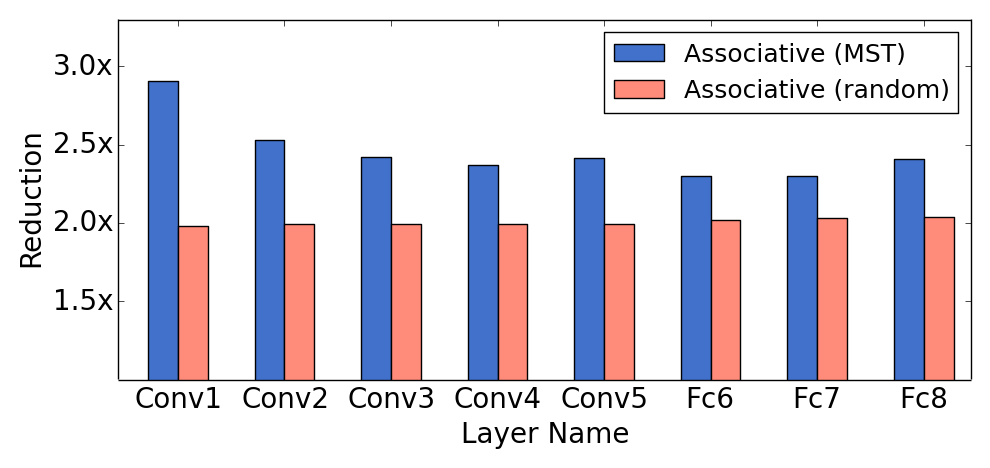}
\caption{The associative implementation of binary tensor convolutions helps to gain 2$\times$ to 3$\times$ reductions in the required number of FADDs on all learnable layers of "AlexNet\_sketch".}
\label{fig:6}
\vskip -0.1in
\end{figure}

\subsection{Whole-net Performance}\label{sec:exp3}

\begin{figure*}[t]
\begin{center}
\captionsetup[subfigure]{labelformat=empty}
\subfloat[]{
\includegraphics[width=0.225\textwidth]{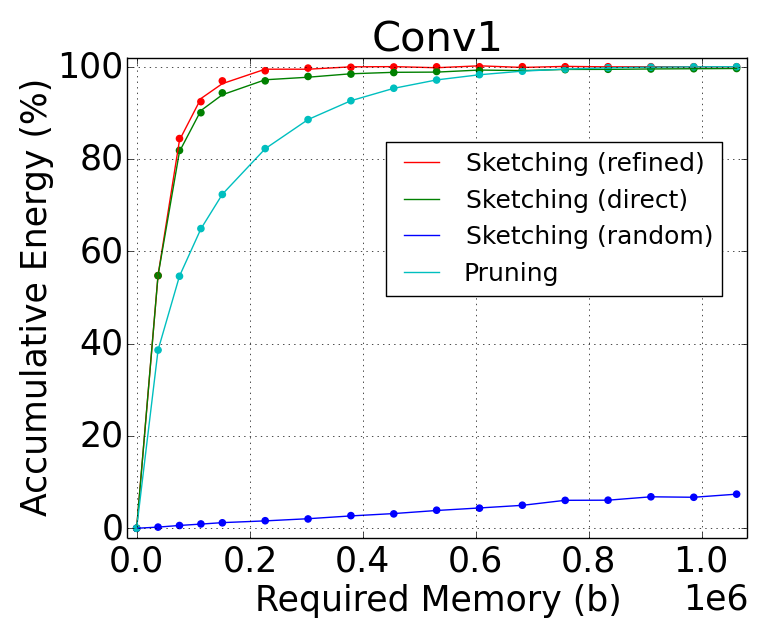}}\hskip 10pt
\subfloat[]{
\includegraphics[width=0.225\textwidth]{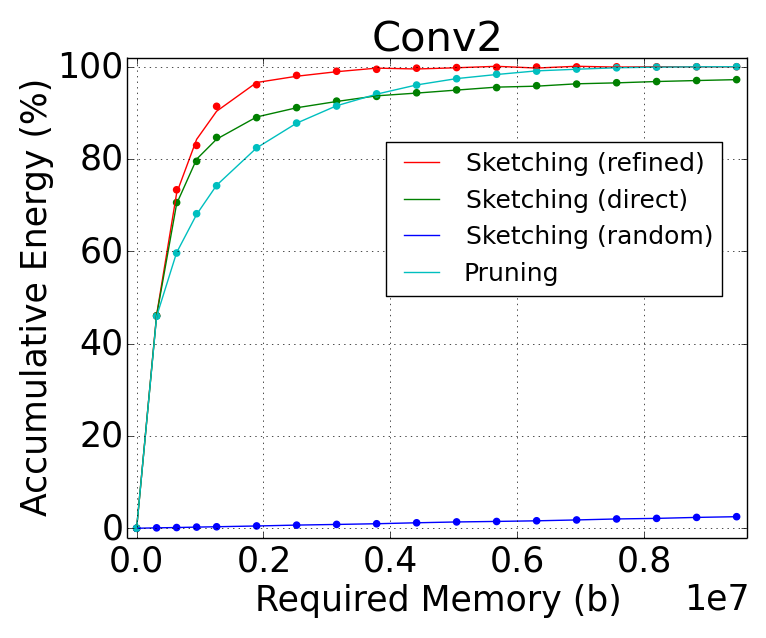}}\hskip 10pt
\subfloat[]{
\includegraphics[width=0.225\textwidth]{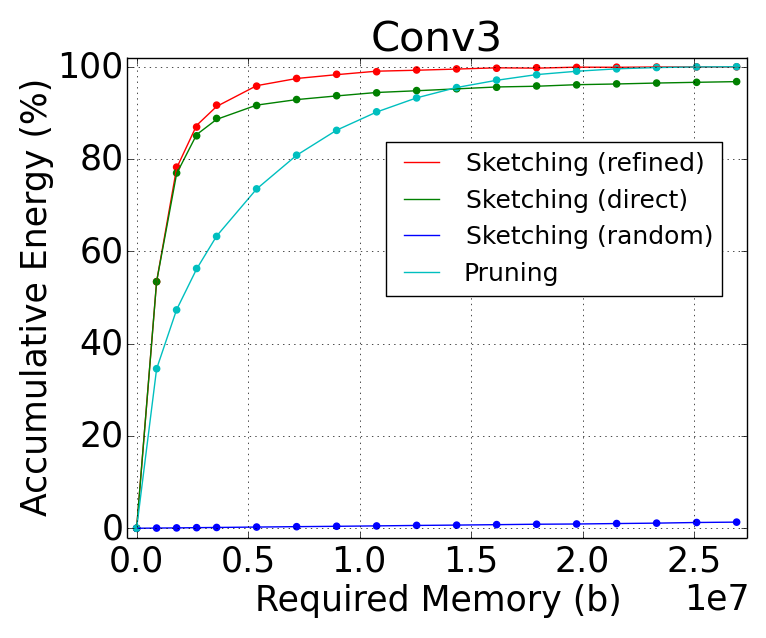}}\hskip 10pt
\subfloat[]{
\includegraphics[width=0.225\textwidth]{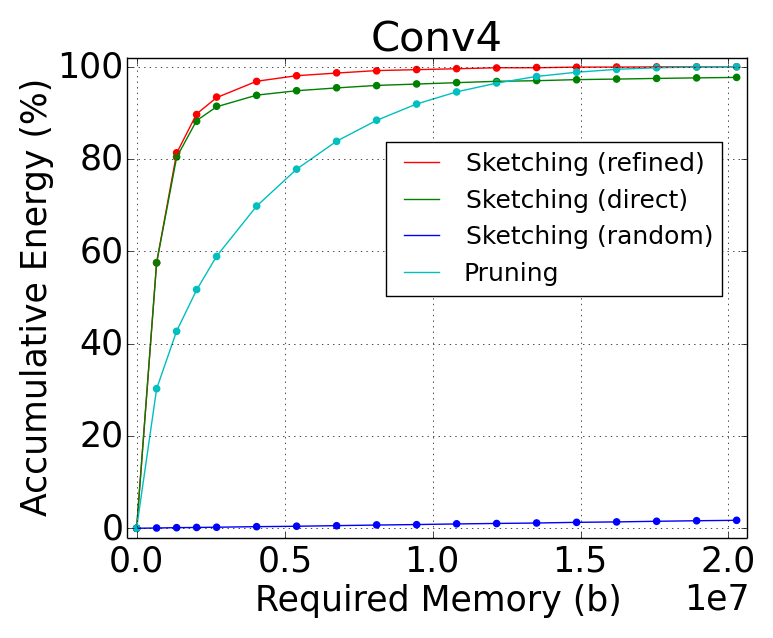}}\hskip 10pt
\vskip -10pt
\subfloat[]{
\includegraphics[width=0.225\textwidth]{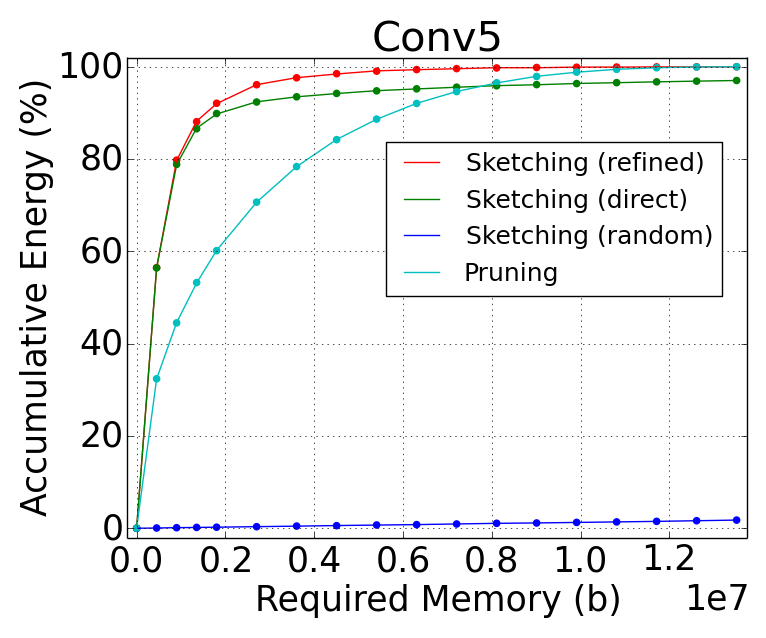}}\hskip 10pt
\subfloat[]{
\includegraphics[width=0.225\textwidth]{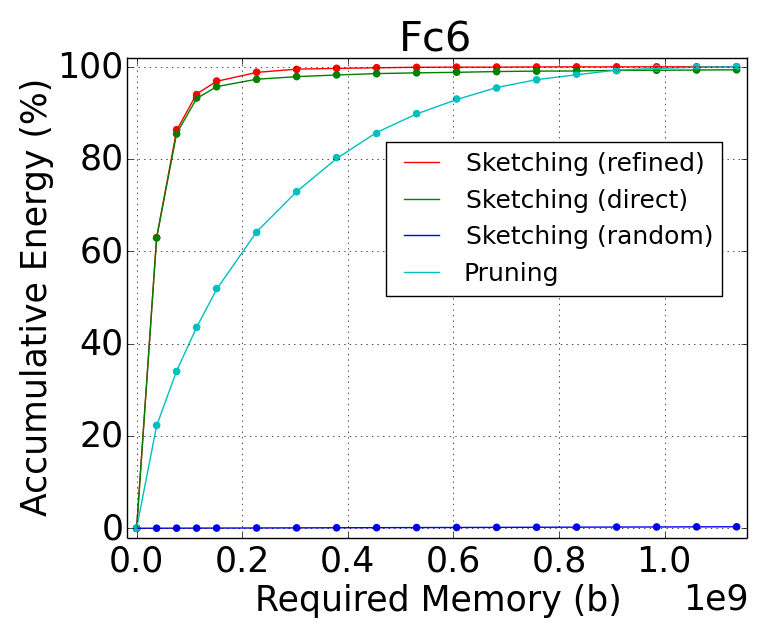}}\hskip  10pt
\subfloat[]{
\includegraphics[width=0.225\textwidth]{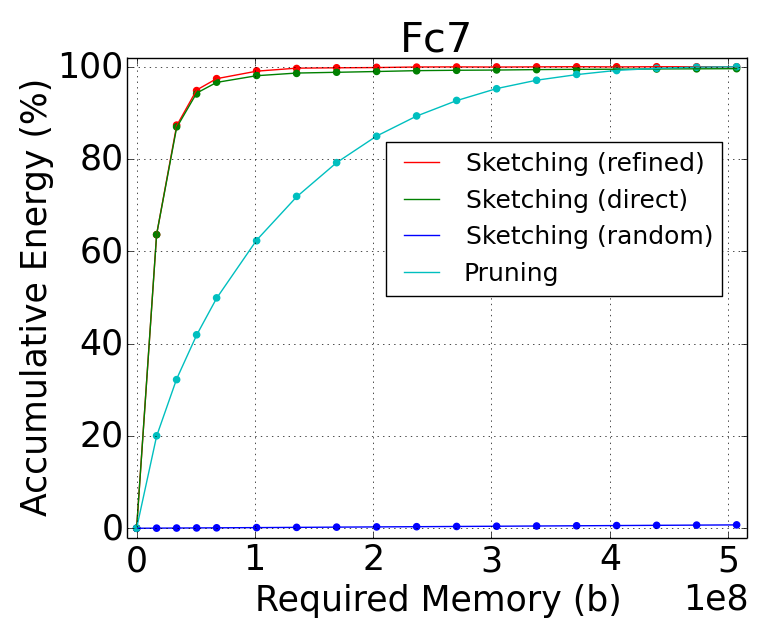}}\hskip 10pt
\subfloat[]{
\includegraphics[width=0.225\textwidth]{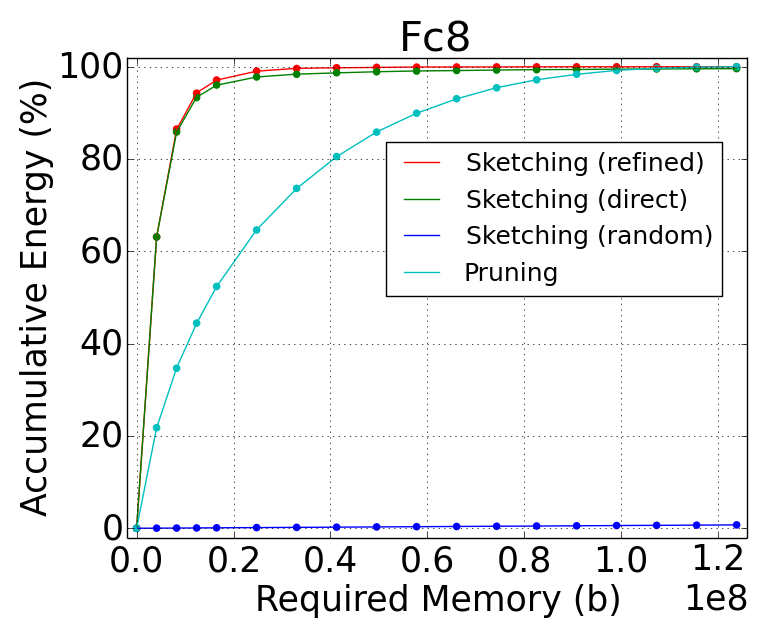}}\hskip 10pt
\caption{Network sketching approximates AlexNet well enough with a much smaller amount of committed memory, and the refinement operation helps to achieve better convergency on all of its learnable layers.}
\label{fig:5}
\end{center}
\vskip -0.06in
\end{figure*}

Having tested Algorithm~\ref{alg:1}, ~\ref{alg:2} and~\ref{alg:3} on the base of their own criteria, it is time to compare the whole-net performance of our sketch with that of other binary weight models~\cite{Courbariaux2015, Rastegari2016}.
Inspired by the previous experimental results, we still use the 3-bit (direct and refined) sketches for evaluation, as they are both very efficient and accurate.
Considering that the fully-connected layers of AlexNet contain more than 95\% of its parameters, we should try sketching them to an extreme, namely 1 bit.
Similar with Rastegari et al.~\cite{Rastegari2016}, we also keep the 'fc8' layer (or say, the output layer) to be of its full precision and report the top-1 and top-5 inference accuracies.
However, distinguished from their methods, we sketch the 'conv1' layer as well because it is also compute-intensive (as shown in Table~\ref{tab:1}).

\begin{table}[ht]
\begin{center}
\begin{tabular}{|C{0.8in}|c|c|c|}
\hline
Model & Params (b) & Top-1 (\%) & Top-5 (\%)   \\
\hline \hline
Reference &  $\smallsim$1951M & 57.2 & 80.3  \\
Sketch (ref.) & $\smallsim$193M & \textbf{55.2} & \textbf{78.8}  \\
Sketch (dir.) & $\smallsim$193M & 54.4 & 78.1  \\
BWN~\cite{Rastegari2016} & $\smallsim$190M & 53.8 & 77.0  \\
BC~\cite{Courbariaux2015}& $\smallsim$189M & 35.4 & 61.0 \\
\hline
\end{tabular}
\end{center}
\caption{Network sketching technique generates binary-weight AlexNets with the ability to make faithful inference and roughly \textbf{10.1}$\times$ fewer parameters than its reference (in bits). Test accuracies of the competitors are cited from the paper. An updated version of BWN gains significantly improved accuracies (top-1: 56.8\% and top-5: 79.4\%), but the technical improvement seems unclear.}\label{tab:5}
\end{table}

Just to avoid the propagation of reconstruction errors, we need to somehow fine-tune the generated sketches.
Naturally, there are two protocols to help accomplish this task; one is known as projection gradient descent and the other is stochastic gradient descent with full precision weight update~\cite{Courbariaux2015}.
We choose the latter by virtue of its better convergency.
The training batch size is set as 256 and the momentum is 0.9.
We let the learning rate drops every 20,000 iterations from 0.001, which is one tenth of the original value as set in Krizhevsky et al.'s paper~\cite{Krizhevsky2012}, and use only the center crop for accuracy evaluation.
After totally 70,000 iterations (i.e., roughly 14 epochs), our sketches can make faithful inference on the test set, and the refined model is better than the direct one.
As shown in Table~\ref{tab:5}, our refined sketch of AlexNet achieves a top-1 accuracy of 55.2\% and a top-5 accuracy of 78.8\%, which means it outperforms the recent released models in the name of BinaryConnect (BC)~\cite{Courbariaux2015} and Binary-Weight Network (BWN)~\cite{Rastegari2016} by large margins, while the required number of parameters only exceeds a little bit.

Network pruning technique also achieves compelling results on compressing AlexNet.
However, it demands a lot of extra space for storing parameter indices, and more importantly, even the optimal pruning methods perform mediocrely on convolutional layers~\cite{Han2015, Guo2016}.
In contrast, network sketching works sufficiently well on both of the layer types.
Here we also testify its efficacy on ResNet~\cite{He2016}.
Being equipped with much more convolutional layers than that of AlexNet, ResNet wins the ILSVRC-2015 classification competition.
There are many instantiations of its architecture, and for fair comparisons, we choose the type B version of 18 layers (as with Rastegari et al.~\cite{Rastegari2016}).

A pre-trained Torch model is available online~\footnote{\url{https://github.com/facebook/fb.resnet.torch/tree/master/pretrained}.} and we convert it into an equivalent Caffe model before sketching~\footnote{\url{https://github.com/facebook/fb-caffe-exts}.}.
For the fine-tuning process, we set the training batch size as 64 and let the learning rate drops from 0.0001.
After 200,000 iterations (i.e., roughly 10 epochs), the generated sketch represents a top-1 accuracy of 67.8\% and a top-5 accuracy of 88.4\% on ImageNet dataset.
Refer to Table~\ref{tab:6} for a comparison of the classification accuracy of different binary-weight models.

\begin{table}[ht]
\begin{center}
\begin{tabular}{|C{0.8in}|c|c|c|}
\hline
Model & Params (b) & Top-1 (\%) & Top-5 (\%)   \\
\hline \hline
Reference & $\smallsim$374M & 68.8 & 89.0  \\
Sketch (ref.)	& $\smallsim$51M  & \textbf{67.8} & \textbf{88.4}  \\
Sketch (dir.)	& $\smallsim$51M & 67.3 & 88.2  \\
BWN~\cite{Rastegari2016}& $\smallsim$28M  & 60.8 & 83.0  \\
\hline
\end{tabular}
\end{center}
\caption{Network sketching technique generates binary-weight ResNets with the ability to make faithful inference and roughly \textbf{7.4}$\times$ fewer parameters than its reference (in bits). The test accuracies of BWN are directly cited from its paper.}\label{tab:6}
\end{table}

\section{Conclusions}\label{sec:con}
In this paper, we introduce network sketching as a novel technology for pursuing binary-weight CNNs.
It is more flexible than the current available methods and it enables researchers and engineers to regulate the precision of generated sketches and get better trade-off between the model efficiency and accuracy.
Both theoretical and empirical analyses have been given to validate its efficacy.
Moreover, we also propose an associative implementation of binary tensor convolutions to further speedup the sketches.
After all these efforts, we are able to generate binary-weight AlexNets and ResNets with the ability to make both efficient and faithful inference on the ImageNet classification task.
Future works shall include exploring the sketching results of other CNNs.

{\small
\bibliographystyle{ieee}
\bibliography{egbib}

\begin{thebibliography}{10}\itemsep=-1pt

\bibitem{Courbariaux2015}
M.~Courbariaux, Y.~Bengio, and J.-P. David.
\newblock Binary{C}onnect: Training deep neural networks with binary weights
  during propagations.
\newblock In {\em Advances in Neural Information Processing Systems (NIPS)},
  2015.

\bibitem{Courbariaux2016}
M.~Courbariaux, I.~Hubara, D.~Soudry, R.~El-Yaniv, and Y.~Bengio.
\newblock Binarized neural networks: Training deep neural networks with weights
  and activations constrained to +1 or -1.
\newblock In {\em arXiv preprint arXiv:1602.02830}, 2016.

\bibitem{Davis1997}
G.~Davis, S.~Mallat, and M.~Avellaneda.
\newblock Adaptive greedy approximations.
\newblock {\em Constructive approximation}, 13(1):57--98, 1997.

\bibitem{Denil2013}
M.~Denil, B.~Shakibi, L.~Dinh, M.~Ranzato, and N.~d. Freitas.
\newblock Predicting parameters in deep learning.
\newblock In {\em Advances in Neural Information Processing Systems (NIPS)},
  2013.

\bibitem{Denton2014}
E.~L. Denton, W.~Zaremba, J.~Bruna, Y.~LeCun, and R.~Fergus.
\newblock Exploiting linear structure within convolutional networks for
  efficient evaluation.
\newblock In {\em Advances in Neural Information Processing Systems (NIPS)},
  2014.

\bibitem{Gong2014}
Y.~Gong, L.~Liu, M.~Yang, and L.~Bourdev.
\newblock Compressing deep convolutional networks using vector quantization.
\newblock {\em arXiv preprint arXiv:1412.6115}, 2014.

\bibitem{Guo2016}
Y.~Guo, A.~Yao, and Y.~Chen.
\newblock Dynamic network surgery for efficient {DNN}s.
\newblock In {\em Advances in Neural Information Processing Systems (NIPS)},
  2016.

\bibitem{Han2015}
S.~Han, J.~Pool, J.~Tran, and W.~J. Dally.
\newblock Learning both weights and connections for efficient neural networks.
\newblock In {\em Advances in Neural Information Processing Systems (NIPS)},
  2015.

\bibitem{Hassibi1993}
B.~Hassibi and D.~G. Stork.
\newblock Second order derivatives for network pruning: Optimal brain surgeon.
\newblock In {\em Advances in Neural Information Processing Systems (NIPS)},
  1993.

\bibitem{He2016}
K.~He, X.~Zhang, S.~Ren, and J.~Sun.
\newblock Deep residual learning for image recognition.
\newblock In {\em IEEE Conference on Computer Vision and Pattern Recognition
  (CVPR)}, 2016.

\bibitem{Ioannou2016}
Y.~Ioannou, D.~Robertson, J.~Shotton, R.~Cipolla, and A.~Criminisi.
\newblock Training {CNN}s with low-rank filters for efficient image
  classification.
\newblock In {\em International Conference on Learning Representations (ICLR)},
  2016.

\bibitem{Jaderberg2014}
M.~Jaderberg, A.~Vedaldi, and A.~Zisserman.
\newblock Speeding up convolutional neural networks with low rank expansions.
\newblock In {\em British Machine Vision Conference (BMVC)}, 2014.

\bibitem{Jia2014}
Y.~Jia, E.~Shelhamer, J.~Donahue, S.~Karayev, J.~Long, R.~Girshick,
  S.~Guadarrama, and T.~Darrell.
\newblock Caffe: Convolutional architecture for fast feature embedding.
\newblock In {\em ACM Multimedia Conference (MM)}, 2014.

\bibitem{Juefei-Xu2016}
F.~Juefei-Xu, V.~N. Boddeti, and M.~Savvides.
\newblock Local binary convolutional neural networks.
\newblock {\em arXiv preprint arXiv:1608.06049}, 2016.

\bibitem{Krizhevsky2012}
A.~Krizhevsky, I.~Sutskever, and G.~E. Hinton.
\newblock Imagenet classification with deep convolutional neural networks.
\newblock In {\em Advances in neural information processing systems (NIPS)},
  2012.

\bibitem{Lebedev2014}
V.~Lebedev, Y.~Ganin, M.~Rakhuba, I.~Oseledets, and V.~Lempitsky.
\newblock Speeding-up convolutional neural networks using fine-tuned
  {CP}-decomposition.
\newblock {\em arXiv preprint arXiv:1412.6553}, 2014.

\bibitem{Lecun1989}
Y.~LeCun, J.~S. Denker, S.~A. Solla, R.~E. Howard, and L.~D. Jackel.
\newblock Optimal brain damage.
\newblock In {\em Advances in Neural Information Processing Systems (NIPS)},
  1989.

\bibitem{Lin2016}
D.~D. Lin, S.~S. Talathi, and V.~S. Annapureddy.
\newblock Fixed point quantization of deep convolutional networks.
\newblock In {\em International Conference on Machine Learning (ICML)}, 2016.

\bibitem{Liu2015}
B.~Liu, M.~Wang, H.~Foroosh, M.~Tappen, and M.~Pensky.
\newblock Sparse convolutional neural networks.
\newblock In {\em IEEE Conference on Computer Vision and Pattern Recognition
  (CVPR)}, 2015.

\bibitem{Mathieu2013}
M.~Mathieu, M.~Henaff, and Y.~LeCun.
\newblock Fast training of convolutional networks through {FFT}s.
\newblock {\em arXiv preprint arXiv:1312.5851}, 2013.

\bibitem{Prim1957}
R.~C. Prim.
\newblock Shortest connection networks and some generalizations.
\newblock {\em Bell system technical journal}, 36(6):1389--1401, 1957.

\bibitem{Rastegari2016}
M.~Rastegari, V.~Ordonez, J.~Redmon, and A.~Farhadi.
\newblock {XNOR-Net}: Imagenet classification using binary convolutional neural
  networks.
\newblock {\em arXiv preprint arXiv:1603.05279v2}, 2016.

\bibitem{Rigamonti2013}
R.~Rigamonti, A.~Sironi, V.~Lepetit, and P.~Fua.
\newblock Learning separable filters.
\newblock In {\em IEEE Conference on Computer Vision and Pattern Recognition
  (CVPR)}, 2013.

\bibitem{ILSVRC15}
O.~Russakovsky, J.~Deng, H.~Su, J.~Krause, S.~Satheesh, S.~Ma, Z.~Huang,
  A.~Karpathy, A.~Khosla, M.~Bernstein, A.~C. Berg, and L.~Fei-Fei.
\newblock {ImageNet Large Scale Visual Recognition Challenge}.
\newblock {\em International Journal of Computer Vision (IJCV)},
  115(3):211--252, 2015.

\bibitem{Simonyan2014}
K.~Simonyan and A.~Zisserman.
\newblock Very deep convolutional networks for large-scale image recognition.
\newblock In {\em International Conference on Learning Representations (ICLR)},
  2014.

\bibitem{Srinivas2015}
S.~Srinivas and R.~V. Babu.
\newblock Data-free parameter pruning for deep neural networks.
\newblock In {\em British Machine Vision Conference (BMVC)}, 2015.

\bibitem{Tai2016}
C.~Tai, T.~Xiao, Y.~Zhang, X.~Wang, and W.~E.
\newblock Convolutional neural networks with low-rank regularization.
\newblock In {\em International Conference on Learning Representations (ICLR)},
  2016.

\bibitem{Vanhoucke2011}
V.~Vanhoucke, A.~Senior, and M.~Z. Mao.
\newblock Improving the speed of neural networks on {CPU}s.
\newblock In {\em Deep Learning and Unsupervised Feature Learning Workshop,
  Advances in Neural Information Processing Systems (NIPS Workshop)}, 2011.

\bibitem{Vasilache2015}
N.~Vasilache, J.~Johnson, M.~Mathieu, S.~Chintala, S.~Piantino, and Y.~LeCun.
\newblock Fast convolutional nets with fbfft: A {GPU} performance evaluation.
\newblock In {\em International Conference on Learning Representations (ICLR)},
  2015.

\bibitem{Veit2016}
A.~Veit, M.~Wilber, and S.~Belongie.
\newblock Residual networks are exponential ensembles of relatively shallow
  networks.
\newblock In {\em Advances in Neural Information Processing Systems (NIPS)},
  2016.

\bibitem{Wu2016}
J.~Wu, C.~Leng, Y.~Wang, Q.~Hu, and J.~Cheng.
\newblock Quantized convolutional neural networks for mobile devices.
\newblock {\em IEEE Conference on Computer Vision and Pattern Recognition
  (CVPR)}, 2016.

\bibitem{Zhang2015}
X.~Zhang, J.~Zou, X.~Ming, K.~He, and J.~Sun.
\newblock Efficient and accurate approximations of nonlinear convolutional
  networks.
\newblock In {\em IEEE Conference on Computer Vision and Pattern Recognition
  (CVPR)}, 2015.

\end{thebibliography}
}

\end{document}